\newcommand{\R}{\mathbb{R}}
\newcommand{\N}{\mathcal{N}}
\newcommand{\dataset}{{\cal D}}
\newtheorem{theorem}{Theorem}
\newtheorem{proposition}{Proposition}
\title{\LARGE \bf Average Margin Regularization for Classifiers}
\author{Matt Olfat and Anil Aswani
\thanks{This material is based upon work partially supported by the National Science Foundation under
Grant CMMI-1847666, and by the UC Berkeley Center for Long-Term Cybersecurity.}
\thanks{Matt Olfat and Anil Aswani are with the Department of Industrial Engineering and Operations Research, University of California, Berkeley 94720}
        {\tt\small \{molfat,aaswani\}@berkeley.edu}}%
\begin{document}

	\maketitle
	\thispagestyle{empty}
	\pagestyle{empty}

\begin{abstract}
Adversarial robustness has become an important research topic given empirical demonstrations on the lack of robustness of deep neural networks. Unfortunately, recent theoretical results suggest that adversarial training induces a strict tradeoff between classification accuracy and adversarial robustness. In this paper, we propose and then study a new regularization for any margin classifier or deep neural network. We motivate this regularization by a novel generalization bound that shows a tradeoff in classifier accuracy between maximizing its margin and average margin. We thus call our approach an average margin (AM) regularization, and it consists of a linear term added to the objective. We theoretically show that for certain distributions AM regularization can both improve classifier accuracy and robustness to adversarial attacks. We conclude by using both synthetic and real data to empirically show that AM regularization can strictly improve both accuracy and robustness for support vector machine's (SVM's), relative to unregularized classifiers and adversarially trained classifiers.
\end{abstract}

\section{INTRODUCTION} \label{sec:intro}

There is renewed interest in robust learning due to the observed fragility of deep classifiers to imperceptible adversarial corruptions \cite{szegedy2013intriguing,fawzi2015fundamental,dalvi2004adversarial,biggio2017wild}. Such classifiers are often key elements in a variety of cyber-physical systems (CPS) like self-driving cars, smart homes, or smart grids. Adversarial perturbations on sensors within CPS can have disastrous consequences, and this has been exhibited by several major attacks on mission-critical CPS \cite{abrams2008malicious,langner2011stuxnet,cardenas2008research,ferrag2018security}. In response, numerous adversarial training approaches have been proposed, both in the context of linear margin classifiers \cite{lanckriet2002robust,trafalis2007robust,bertsimas2017robust} and deep classifiers \cite{goodfellow6572explaining,fawzi2015fundamental}. These approaches train classifiers so as to minimize loss with respect to adversarially-perturbed data.

Interestingly, such adversarial methods have been shown to be equivalent to a particular type of regularization in the context of linear margin classifiers and regression \cite{xu2009robust,xu2009robustness,livni2012simple,bertsimas2017robust}. While regularization protects against (though does not always eliminate) overfitting \cite{cortes1995support}, such outcomes critically depend upon having regularization that is congruous to the underlying data distributions \cite{aggarwal2001outlier,scholkopf2001learning,lanckriet2002robust,weston1999adaptive,bi2005support,fawzi2015fundamental,xu2009robustness}. 

Here, we argue that adversarial training ignores notable attributes of the data. For instance, image data often has manifold structure \cite{gerber2009manifold,pless2009survey,peyre2009manifold}. Similar claims may be made about the dynamics of learned systems in system identification or reinforcment learning contexts \cite{roberts1989utility,shi2010off}. Yet adversarial training regularizes with respect to full-dimensional perturbations and not with respect to any underlying manifold structure. This is significant because the imperceptibility of the most successful adversarial perturbations suggests that they lie orthogonal to these manifolds \cite{szegedy2013intriguing}. Thus, any robust methodology that does not exploit this kind of structure will likely remain susceptible to adversarial attacks.


Recently, \cite{tsipras2018there} have claimed that there is a strict trade-off between the accuracy of a classifier and its robustness to adversarial perturbations. They augment their argument with demonstrations of this inverse relationship on a specific dataset, and claim that adversarial training best minimizes the cost of robustness. This paper shows that this is trade-off is not general and that, in fact, robustness and accuracy can grow concurrently for broad classes of datasets. 

We make three contributions: First, we develop a novel generalization bound that shows classifier accuracy depends on a tradeoff between minimum and average margin. Second, we propose a new regularization that we call average margin (AM) regularization. This regularization consists of a linear term added to the objective, and is hence amenable to efficient numerical computation. We prove that for certain distributions, AM regularization can improve both accuracy and adversarial robustness of a classifier. Third, we use synthetic and real data to empirically show that AM regularization can generate support vector machine (SVM) classifiers that strictly dominate (in terms of accuracy and robustness) classifiers computed with or without adversarial training. Taken together, these results suggest that the phenomenon of adversarial fragility is an issue of overfitting rather than a fundamental issue unique to adversarial attacks.

\subsection{Robust Linear SVM} \label{subsec:litreviewrobustness}

Linear SVM relies upon on maximizing training margin by minimizing the \textit{hinge-loss}, and several methods have been proposed to improve its robustness. Some approaches truncate the hinge-loss function \cite{krause2004leveraging,collobert2006trading,liu2006multicategory,bartlett2002rademacher,suzumura2014outlier,yu2010relaxed}, but a major issue with this approach is that it forfeits the convexity of the original problem. Other methods \cite{song2002robust,masnadi2010design} penalize outliers uniformly while maintaining the convexity of the hinge-loss, and \cite{aggarwal2001outlier} generates sparse projections to minimize the visibility of outliers. Instead of attempting to devalue outliers, \cite{xu2006robust} formulate a mixed-integer problem with the hinge-loss that removes them, and \cite{weston1999adaptive} redesigns the loss function entirely using bounds on the leave-one-out cross-validation error. Adversarial training has also been considered: \cite{lanckriet2002robust,trafalis2007robust,bertsimas2017robust} take a minimax approach, solving bilevel programs to design classifiers robust to worst-case perturbations of either a given distribution or a given magnitude. Robustness of classifiers to noise in the label, as opposed to only the features, has also been considered \cite{bertsimas2017robust,biggio2011support,natarajan2013learning}.

\subsection{Robust Deep Classifiers}

Adversarial fragility is pronounced in deep classifiers. \cite{szegedy2013intriguing} identified the sensitivity of deep learners to adversarial noise, and \cite{goodfellow6572explaining,fawzi2015fundamental} followed up with empirical and theoretical examinations of this instability. It has been shown that relatively minute and visually unrecognizable perturbations (in the case of images) can significantly impact the accuracy of these learners, and some work has been done on characterizing these minimal deviations \cite{dalvi2004adversarial,biggio2017wild}. Notably, \cite{carlini2017adversarial} showed that many existing methods for adversarially-robust deep classification are not wholly effective. However, there has been a spate of promising recent results in this direction, some of which come with theoretical guarantees \cite{madry2017towards,kannan2018adversarial,raghunathan2018certified}.

\subsection{Outline} \label{subsec:outline}

Section \ref{sec:probsetting} describes our notation and presents the problem setup that will be considered.  Next, Section \ref{sec:method} introduces a novel generalization bound, proposes the average margin (AM) regularization, and then theoretically studies properties of this regularization for a specific distribution. Section \ref{sec:method} presents empirical results comparing linear classifiers that have been computed using different regularization and adversarial training approaches, using multiple datasets.

\section{NOTATION AND SETUP} \label{sec:probsetting}

We use $\N(\mu,\Sigma)$ to refer to a multivariate normal distribution with mean $\mu$ and covariance matrix $\Sigma$. Also, let $\mathbf{0}_d$ and $\mathbf{1}_d$ refer to vectors of length $d$ with all entries zero and one, respectively, and $I_d$ to the $d\times d$ identity matrix. These subscripts will be dropped when the size is obvious due to context. In contrast, the function notation $\mathbf{1}(\cdot)$ refers to the indicator function. We use $X_i$ to denote the $i$-th row of a matrix $X$. For some kernel function $k:\R^d\times\R^d\rightarrow\R$, let the matrix $K(X,X')$ be such that $K(X,X')_{ij}=k(X_i^{\vphantom{'}},X_j')$.

Consider data observations $(x,y)\sim\dataset$ where $x\in\R^d$ and $y\in\{+1,-1\}$. In binary classification, the goal is compute a classifier $h:\R^d\rightarrow\{+1,-1\}$ to predict a label $y$ from from a feature vector $x$. For a margin classifier from a family $\mathcal{H}$, this is achieved by minimizing $\min_{h\in\mathcal{H}}\textstyle\frac{1}{n}\sum_{i=1}^n\ell(y_ih(x_i))$, which is the sample average of some given loss function $\ell(\cdot)$. The expected classification error rate of a classifier $h$ is defined as $\mathcal{L}(h) = \mathbb{E}(\mathbf{1}(y \neq \mathrm{sign}(h(x))))$.

\emph{Adversarial robustness} for a classifier $h$ refers to its ability to maintain accuracy in predicting a label $y$ when given a corrupted corresponding feature vector $x+\delta$, where corruption $\delta$ is chosen by an adversary and has magnitude bounded by a quantity $e$. The expected adversarial classification error rate of a classifier $h$ when the adversary can perturb data by $e$ magnitude is $\mathcal{L}(h,e) = \mathbb{E}(\max_{\delta : \|\delta\|\leq e}\mathbf{1}(y \neq h(x+\delta)))$. The inner maximization is interpreted as an adversary choosing an attack. Also note that $\mathcal{L}(h,0) = \mathcal{L}(h)$.

\section{A NEW REGULARIZATION}\label{sec:method}

\begin{figure}[t!]
	\centering
	\includegraphics[clip, trim=0.5cm 1cm 0.5cm 1cm, width=1.0\linewidth]{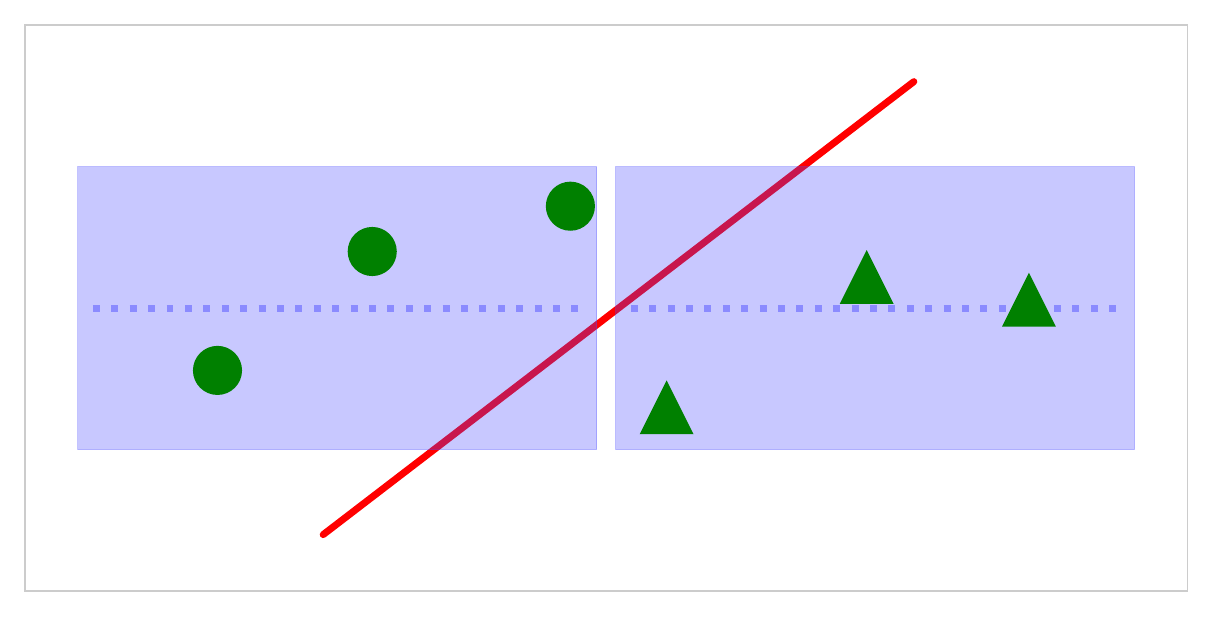}
	\caption{This example shows that maximizing the margin on training data can reduce classification accuracy since it does not use data far from the margin boundary. The marks are sampled data, the two supports of the data distributions for the two labels $y\in\{-1,+1\}$ are the two shaded rectangles, and the dashed line is the maximum margin linear classifier.}
	\label{fig:motivateplot}
\end{figure}

Margin classifiers primarily use only data near the boundaries of different classes for the purpose of estimating the parameters of the classifier. However, in the low (relative to dimensionality) data regime this can be problematic. \Cref{fig:motivateplot} shows an example where the usual margin classifier has issues. Maximizing the minimum margin leads to a classifier with high expected classification error because only a small amount of the data lies at the boundaries of the two classes. The manifold-like structure of the two classes leads to a situation where much of the data lies away from the boundary. A natural question to ask is how margin classifiers may be modified in order to better use data away from the boundary to improve predictions in situations similar to the above shown example.


Given the manifold-like example above, one possiblity is to use manifold regularization techniques. In fact, manifold regularization can be useful for regression \cite{belkin2006manifold,bickel2007local,aswani2011regression,cheng2013local}. However, a disadvantage of manifold regularization is it requires the indirect step of first estimating the manifold, and then using the estimated manifold for regularization. This indirect step can increase estimation error in a way that often reverses its regularizing effect. Our goal then is to design a regularizer that provides benefits in the manifold-like setting but does not require estimation of any manifolds.

In this section, we develop and study a new regularization for margin classifiers. We begin by proving a new generalization bound that demonstrates how maximizing the minimum margin does not always lead to minimal expected classification error. This generalization bound is used to motivate our new regularization for any margin classifier, which we call the average margin (AM) regularization. Next, we provide a probabilistic interpretation of this regularization in the context of deep learning. We conclude the section by discussing AM regularization in the special context of SVM. It is shown how this regularization can be used for kernel SVM, and then a result is given showing how AM regularization can simultaneously improve expected classification error and robustness to adversarial perturbations; this is significant because it is in direct contrast to results on adversarial training \cite{tsipras2018there} that find a strict trade-off between classifier accuracy and robustness to adversarial perturbations.

\subsection{Average Margin Generalization Bound}

Classical results on the generalization error of classifiers \cite{bartlett2002rademacher} provide justification for maximizing the minimum margin of classifiers. Below, we present a new generalization bound in terms of the average margin of a classifier.

\begin{theorem}\label{thm:amregbound}
	Let $\mathcal{L}(h) = \mathbb{E}(\mathbf{1}(y \neq \mathrm{sign}(h(x))))$ be the expected classification error rate of $h$, $K_\gamma(h) = \#\{i : y_ih(x_i) \leq \gamma\}/n$ be the fraction of data with $\gamma$-margin mistakes, $J(h) = \frac{1}{n}\sum_{i=1}^n y_i h(x_i)$ be the average classification margin, and suppose that $\sup_x |h(x)| \leq c$ for all $h\in\mathcal{H}$. Then for any $\zeta \in [0,1]$ we have with probability at least $1-2\delta$ that
	\begin{multline}
	\label{eqn:thmgpb}
	\mathcal{L}(h) \leq \zeta\cdot (1-J(h)/c) + (1-\zeta)\cdot K_\gamma(h)+\\
	4\frac{\mathcal{R}_n(\mathcal{H})}{\gamma} + \sqrt{\frac{\log(\log_2 \frac{4c}{\gamma})}{n}} + \sqrt{\frac{\vphantom{\log(\log_2\frac{4c}{\gamma}}\log(1/\delta)}{2n}}.
	\end{multline}
	for all $\gamma \in(0,c]$ and all $h\in\mathcal{H}$.
\end{theorem}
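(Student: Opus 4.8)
The plan is to derive this bound as a convex combination of two standard margin-based generalization bounds, interpolated by the parameter $\zeta$. The first ingredient is the classical margin bound of Bartlett and Mendelson \cite{bartlett2002rademacher}: for a fixed $\gamma \in (0,c]$, with probability at least $1-\delta$, for all $h \in \mathcal{H}$,
\begin{equation*}
\mathcal{L}(h) \leq K_\gamma(h) + \frac{2\mathcal{R}_n(\mathcal{H})}{\gamma} + \sqrt{\frac{\log(1/\delta)}{2n}}.
\end{equation*}
To make this hold uniformly over all $\gamma \in (0,c]$ simultaneously, I would apply the standard union-bound-over-a-geometric-grid argument (peeling $\gamma$ over a dyadic sequence between $\gamma$ and $c$, of which there are about $\log_2(4c/\gamma)$ values), which is what produces the extra $\sqrt{\log(\log_2(4c/\gamma))/n}$ term and the factor of $4$ rather than $2$ on the Rademacher complexity; this is precisely the ``margin bound with free $\gamma$'' packaged in standard references.

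The second ingredient is a bound in terms of the average margin $J(h)$. The key observation is that $\mathbf{1}(y \neq \sign(h(x))) \leq 1 - yh(x)/c$ pointwise, since $|h(x)| \leq c$ implies $yh(x)/c \in [-1,1]$, and whenever $y \neq \sign(h(x))$ we have $yh(x) \leq 0$ so the right side is at least $1$. Taking expectations gives $\mathcal{L}(h) \leq 1 - \mathbb{E}(yh(x))/c$. Then I would bound $\mathbb{E}(yh(x))$ below by its empirical counterpart $\frac{1}{n}\sum_i y_i h(x_i) = J(h)$ minus a uniform deviation term: applying the standard symmetrization/Rademacher bound to the function class $\{(x,y) \mapsto yh(x) : h \in \mathcal{H}\}$ (whose Rademacher complexity is $\mathcal{R}_n(\mathcal{H})$ since multiplying by $y \in \{\pm 1\}$ does not change the Rademacher average), and dividing by $c$, yields with probability at least $1-\delta$ that $\mathcal{L}(h) \leq (1 - J(h)/c) + \frac{2\mathcal{R}_n(\mathcal{H})}{c} + \sqrt{\frac{\log(1/\delta)}{2n}}$ uniformly in $h$. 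Since $c \geq \gamma$, the term $2\mathcal{R}_n(\mathcal{H})/c$ is dominated by $2\mathcal{R}_n(\mathcal{H})/\gamma$, so this inequality is implied by one with $4\mathcal{R}_n(\mathcal{H})/\gamma$ on the right.

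Finally, I would combine the two. Conditioning on the intersection of the two good events (each of probability $\geq 1-\delta$), which holds with probability at least $1-2\delta$ by a union bound, both inequalities hold simultaneously; taking the convex combination with weights $\zeta$ and $1-\zeta$ gives
\begin{equation*}
\mathcal{L}(h) \leq \zeta(1 - J(h)/c) + (1-\zeta)K_\gamma(h) + \frac{4\mathcal{R}_n(\mathcal{H})}{\gamma} + \sqrt{\frac{\log(\log_2(4c/\gamma))}{n}} + \sqrt{\frac{\log(1/\delta)}{2n}},
\end{equation*}
where I have used that $\zeta \cdot (\text{slack from bound 1}) + (1-\zeta)\cdot(\text{slack from bound 2})$ is at most the sum of the two slack terms (since both are nonnegative and $\zeta \in [0,1]$). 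The main obstacle I anticipate is bookkeeping the uniformity over $\gamma$ correctly so that the $\log\log$ term and the constant $4$ come out exactly as stated — that is, making sure the dyadic peeling argument for the $K_\gamma$ bound absorbs the $c/\gamma$ term from the average-margin bound without inflating constants — rather than anything conceptually deep; the two pointwise inequalities and the Rademacher symmetrization steps are routine.
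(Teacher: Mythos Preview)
Your proposal is correct, but the decomposition differs from the paper's. You derive two separate uniform bounds---the peeled $K_\gamma$ margin bound and a Rademacher bound for the linear loss $u \mapsto 1-u/c$ giving the $J(h)$ term---hold each with probability $1-\delta$, intersect by a union bound (this is where your $1-2\delta$ comes from), and then take the $\zeta$/$(1-\zeta)$ convex combination of the two resulting inequalities, observing that the $J$-bound's slack is dominated by the $K_\gamma$-bound's slack since $c\geq\gamma$. The paper instead builds the convex combination \emph{at the level of the surrogate loss}: it sets $\ell_\gamma(u)=\zeta(1-u/c)+(1-\zeta)l_\gamma(u)$, notes this is $(1/\gamma)$-Lipschitz, applies Bartlett--Mendelson \emph{once} per grid point $\gamma_k=c/2^k$, and then peels over $k$; here the $1-2\delta$ arises from $\sum_k 1/(k+1)^2 = \pi^2/6 < 2$ rather than from combining two events. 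Your route is more modular (it just quotes two off-the-shelf bounds), while the paper's single-loss packaging makes the Lipschitz bookkeeping cleaner and avoids having to argue separately that the $J$-slack is absorbed by the $K_\gamma$-slack. One minor point: your final justification ``at most the sum of the two slack terms'' is weaker than what you actually need and use; the inequality you want is that the convex combination of the two slacks is at most the \emph{larger} slack (the peeled one), which holds because $2\mathcal{R}_n/c \leq 4\mathcal{R}_n/\gamma$.
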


\begin{proof}
This proof uses a similar argument to the proof of Theorem 2 by \cite{kakade2009complexity}, with suitable modifications made to apply to our setting.  Define the functions
\begin{equation}
l_\gamma(u) = \begin{cases}
1, &u\leq 0\\
1 - u/\gamma, &0 < u < \gamma\\
0, & u \geq \gamma
\end{cases}
\end{equation}
and $\ell_\gamma(u) = \zeta\cdot(1 - u/c) + (1-\zeta)\cdot l_\gamma(u)$. Let $\mathcal{L}_\gamma(h) = \mathbb{E}(\ell_\gamma(yh(x)))$ and $\hat{\mathcal{L}}_\gamma(h) = \frac{1}{n}\sum_{i=1}^n\ell_\gamma(y_ih(x_i))$.  We will consider the values $\gamma_k = c/2^k$ and $\delta_k = \delta/(k+1)^2$ for $k \in \{0,1,\ldots\}$. Since $\ell_{\gamma_k}(u)$ is Lipschitz with constant $\zeta/ c + (1-\zeta)/\gamma_k \leq 1/\gamma_k$, applying Theorem 7 from \cite{bartlett2002rademacher} gives that
\begin{equation}
\mathcal{L}_{\gamma_k}(h) \leq \hat{\mathcal{L}}_{\gamma_k}(h) + \frac{2}{\gamma_k}\mathcal{R}_n(\mathcal{H}) + \sqrt{\frac{\log(1/\delta_k)}{2n}}
\end{equation}
holds with probability at least $1-\delta_k$ for all $h\in\mathcal{H}$. Next observe that for any $\zeta \in [0,1]$, $\gamma > 0$, and any $h\in\mathcal{H}$; we have $\mathcal{L}(h) \leq \mathcal{L}_\gamma(h)$ and $\hat{\mathcal{L}}_\gamma \leq \zeta\cdot (1-J(h)/c) + (1-\zeta)\cdot K_\gamma(h)$. Thus with probability at least $1-\delta_k$ we have
\begin{multline}
\label{eqn:onepf}
\mathcal{L}(h) \leq \zeta\cdot (1-J(h)/c) + (1-\zeta)\cdot K_{\gamma_k}(h) + \\\frac{2}{\gamma_k}\mathcal{R}_n(\mathcal{H}) + \sqrt{\frac{\log(1/\delta_k)}{2n}}
\end{multline}
for all $h\in\mathcal{H}$. Applying the union bound over all $k \in \{0,1,\ldots\}$ gives that (\ref{eqn:onepf}) holds with probability at least $1-\pi^2\delta/6 \geq 1-2\delta$ for all $k \in \{0,1,\ldots\}$ and $h\in\mathcal{H}$.  Now we will assume that this union event occurs.  This  implies (\ref{eqn:thmgpb}) holds for $\gamma = c$. Next consider $\gamma\in(0,c)$, and choose the $k$ such that $\gamma_k \leq \gamma<\gamma_{k-1}$. Note $k\leq \log_2(c/\gamma)+1$, and so 
\begin{multline}
\label{eqn:onepf1}
\mathcal{L}(h) \leq \zeta\cdot (1-J(h)/c) + (1-\zeta)\cdot K_{\gamma}(h) + \\\frac{4}{\gamma}\mathcal{R}_n(\mathcal{H}) + \sqrt{\frac{\log(1/\delta) + 2\log(\log_2(4c/\gamma))}{2n}}
\end{multline}
since $K_{\gamma_k}(h) \leq K_\gamma(h)$, $1/\gamma_k \leq 2/\gamma$, and $\log(1/\delta_k) \leq \log(1/\delta) + 2\log(\log_2(4c/\gamma))$.
\end{proof}


\subsection{Average Margin Regularization}

Given the above intuition, we propose that the average margin $\frac{1}{n}\sum_{i=1}^n y_i h(x_i)$ can be used as a regularization term for any margin classifier. Specifically, an AM-regularized classifier can be computed by solving 
\begin{equation}
\label{eq:generalrob}
\min_{h\in\mathcal{H}}\textstyle\frac{1}{n}\sum_{i=1}^n\ell(y_ih(x_i))-\mu\cdot\frac{1}{n}\sum_{i=1}^ny_ih(x),
\end{equation}
where $\mu\geq 0$ is a tuning parameter. Note that we subtract the average margin term because we are minimizing.

Next, we consider deep learning with activation function $\exp(h(x))/(1+\exp(h(x))$. The logistic loss is often used to construct classifiers, and the corresponding AM-regularized network is computed by solving
\begin{equation}
\min_{h\in\mathcal{H}}\textstyle\frac{1}{n}\sum_{i=1}^n\log(1+\exp(-y_ih(x_i))) - \mu\cdot\frac{1}{n}\sum_{i=1}^ny_ih(x_i),
\end{equation}
where $\mu\geq 0$ is again a tuning parameter. Now suppose we use the labels $t = (1+y)/2 \in\{0,1\}$, and we make the identifications that $\hat{p}_0(x) = 1/(1+\exp(h(x)))$ and $\hat{p}_1(x) = \exp(h(x))/(1+\exp(h(x)))$. Then training the AM-regularized network is equivalent to solving
\begin{multline}
\min_{h\in\mathcal{H}}\textstyle-\frac{1}{n}\sum_{i=1}^n\big[t_i\log(\hat{p}_1(x)) +
\textstyle(1-t_i)\log(\hat{p}_0(x))\big] + \\-\textstyle\mu\cdot\frac{1}{n}\sum_{i=1}^nt_i\log\frac{\hat{p}_1(x)}{\hat{p}_0(x)} +\\-\textstyle\mu\cdot\frac{1}{n}\sum_{i=1}^n(1-t_i)\log\frac{\hat{p}_0(x)}{\hat{p}_1(x)}.
\end{multline}
The first term is cross-entropy, while the last two terms are negative log-likelihood ratios. Thus, for deep learning our AM regularization encourages larger log-likelihood ratios. Recalling hypothesis testing, a larger log-likelihood ratio makes it easier to distinguish between classes.

\subsection{Special Case of SVM}

Here, we consider AM regularization in the special case of SVM. First, consider linear SVM with $h(x) = x^\textsf{T}\beta + b$.  Then the AM-regularized linear SVM is given by
\begin{equation}
\label{eqn:lsvm}
\begin{aligned}
\min\ & \textstyle\lambda\|\beta\|^2 + \frac{1}{n}\sum_{i=1}^n s_i - \mu\cdot\frac{1}{n}\sum_{i=1}^n y_i(x_i^\textsf{T}\beta + b)\\
\text{s.t. } & y_i(x_i^\textsf{T}\beta + b) \geq 1 - s_i, \quad\text{for } i = 1,\ldots,n\\
& s_i \geq 0, \hspace{2.54cm}\text{for } i = 1,\ldots,n
\end{aligned}
\end{equation}
As seen above, one advantage is that AM regularization is simply a linear term in the objective function. Thus, AM regularization does not significantly affect the computational complexity of solving the linear SVM optimization problem. Another benefit of AM regularization is it can be dualized, which allows for its use in kernel SVM. A standard argument using the KKT conditions shows that the kernel SVM with AM regularization is computed by solving
\begin{equation} \label{eq:dualrmsvm}
\begin{aligned}
\min\ & z^\textsf{T}\big(yy^\textsf{T}\circ K(X,X)\big)z - \mathbf{1}_nz^\textsf{T}\\
\text{s.t. } & yz^\textsf{T} = 0\\
&\textstyle\frac{\mu}{1+\mu}\cdot\frac{1}{\lambda n}\leq z\leq\frac{1}{\lambda n}
\end{aligned}
\end{equation}
This kernel SVM formulation provides further intuition about AM regularization: It shows that increasing $\mu$ increases the impact of points away from the margin, thereby mixing the original support vectors with an average over all data points.

Last, we show that AM regularization can generate classifiers that both improve classification accuracy and robustness. Let $\mathcal{L}(h,e) = \mathbb{E}(\max_{\delta : \|\delta\|\leq e}\mathbf{1}(y \neq h(x+\delta)))$ be the generalization error of classifier $h$ when the adversary can perturb data by $e$ magnitude. We specifically prove a result that formalizes the intuition shown in \Cref{fig:motivateplot}.

\begin{proposition}
\label{prop:bet}
Let $\hat{h}_{AM}$ and $\hat{h}_{L2}$ be the linear classifiers computed by SVM with and without AM regularization, respectively, using $n=4k\geq 4$ points sampled from a data distribution. Recalling (\ref{eqn:lsvm}), we assume $\lambda$ is chosen so that all sampled points lie beyond the margin, that $\mu$ can be chosen based on the sampled data, and that without loss of generality the linear classifier has no intercept term (i.e. $h(x) = x^\textsf{T}\beta$). There exists a data distribution such that 
\begin{align*}
\mathcal{L}(\hat{h}_{AM},e) < \mathcal{L}(\hat{h}_{L2},e), &\ \mathrm{for }\ e \in [0,7\sqrt{5}/25)\\
\mathcal{L}(\hat{h}_{AM},e) = \mathcal{L}(\hat{h}_{L2},e), &\ \mathrm{for }\ e \in [7\sqrt{5}/25, 15\sqrt{5}/25)\\
\mathcal{L}(\hat{h}_{AM},e) > \mathcal{L}(\hat{h}_{L2},e), &\ \mathrm{for }\ e \in [15\sqrt{5}/25, \sqrt{5})\\
\mathcal{L}(\hat{h}_{AM},e) = \mathcal{L}(\hat{h}_{L2},e), &\ \mathrm{for }\ e \in [\sqrt{5}, 2\sqrt{5})\\
\mathcal{L}(\hat{h}_{AM},e) < \mathcal{L}(\hat{h}_{L2},e), &\ \mathrm{for }\ e \in [2\sqrt{5}, 110\sqrt{5}/25)\\
\mathcal{L}(\hat{h}_{AM},e) = \mathcal{L}(\hat{h}_{L2},e), &\ \mathrm{for }\ e \in [110\sqrt{5}/25, \infty)\\
\end{align*}
\end{proposition}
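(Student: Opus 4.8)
The plan is to prove the proposition by an explicit construction in $\R^2$. I would take $\dataset$ to be a finitely supported distribution whose support splits into a small, centrally symmetric set of \emph{boundary atoms} carrying a controlled amount of mass, whose role is to pin down the two classifiers, together with a collection of \emph{bulk atoms} lying along two thin manifold-like clusters (one per class) that carry the remaining mass and that the max-margin classifier will partly misclassify, exactly as in \Cref{fig:motivateplot}. I would arrange $2k$ equally weighted copies of each class among the boundary atoms, so that conditioning on the positive-probability event that the $n = 4k$ sampled points are exactly these atoms makes both $\hat{h}_{L2}$ and $\hat{h}_{AM}$ deterministic; $\lambda$ is then taken small enough that every sampled point lies strictly beyond the margin, so that all hinge terms in (\ref{eqn:lsvm}) vanish at optimum. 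Because $\dataset$ is discrete the "sample" can be taken to be the boundary atoms with probability bounded away from zero, so no genericity argument is needed (alternatively one may thicken each atom into a tiny ball without changing the combinatorics).

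Next I would compute the two classifiers in closed form. Since there is no intercept and $\dataset$ is centrally symmetric, $\hat{h}_{L2}$ is the maximum-margin separator through the origin of the boundary atoms, whose normal is the point of the convex hull of the positive boundary atoms nearest the origin; by the symmetry of the construction this reduces to a one-dimensional optimization over the slope and has an explicit solution. For $\hat{h}_{AM}$ I would use the dual (\ref{eq:dualrmsvm}): since $\mu$ may be chosen based on the data, taking $\mu$ large enough forces $\frac{\mu}{1+\mu}\cdot\frac{1}{\lambda n}$ up to $\frac{1}{\lambda n}$, so the only feasible point is $z = \frac{1}{\lambda n}\mathbf{1}_n$ (which satisfies $y^\textsf{T}z = 0$ because the classes are balanced), and hence $\hat{h}_{AM}(x) \propto x^\textsf{T}\sum_{i=1}^n y_i x_i$, i.e. its decision boundary is the line through the origin orthogonal to the difference of the class means of the boundary atoms. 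The construction is designed precisely so that this difference-of-means direction is aligned with the bulk clusters while the max-margin direction is tilted away from them.

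The third ingredient is the adversarial error formula: for a linear classifier $h(x) = x^\textsf{T}\beta$ and any $e \geq 0$, the optimal attack moves $x$ a distance $e$ straight toward the decision boundary, so $\mathcal{L}(h,e) = \Pr_{(x,y)\sim\dataset}\big(y\, x^\textsf{T}\beta/\|\beta\| \leq e\big)$, which for our finitely supported $\dataset$ is a step function of $e$ with jumps exactly at the distances from the atoms of $\dataset$ to the hyperplane $\{x : x^\textsf{T}\beta = 0\}$. I would tabulate these two step functions, one for $\beta$ from $\hat{h}_{L2}$ and one for $\hat{h}_{AM}$, and verify that their difference changes sign precisely at the five thresholds $7\sqrt{5}/25 < 15\sqrt{5}/25 < \sqrt{5} < 2\sqrt{5} < 110\sqrt{5}/25$, which are (by the chosen atom coordinates) the relevant atom-to-hyperplane distances; the strict inequality at $e = 0$ is just the statement that $\hat{h}_{AM}$ classifies strictly more bulk mass correctly than $\hat{h}_{L2}$ does, which is the headline contrast with \cite{tsipras2018there}.

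The main obstacle is entirely in the design of the construction: one must choose the atom coordinates, the mass split, $\lambda$, and the value of $\mu$ so that simultaneously (i) the max-margin hyperplane of the boundary atoms comes out with the intended unfavorable tilt, (ii) every sampled point lies strictly beyond the margin so the hinge terms vanish and the dual analysis applies, (iii) the forced $z = \frac{1}{\lambda n}\mathbf{1}_n$ is indeed optimal for the chosen finite $\mu$, and (iv) the two resulting step functions exhibit exactly the six listed regimes with breakpoints at exactly the stated values. This is a finite search, and once the coordinates are fixed everything downstream is elementary planar geometry (distances from points to lines); but making all four conditions hold at once is the delicate part, and it is where essentially all the work of the proof lies.
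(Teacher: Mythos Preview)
Your overall strategy (explicit planar construction, closed-form computation of both separators, comparison of the two resulting step functions in $e$) is the same as the paper's, and your dual observation that for large $\mu$ the AM classifier points along $\sum_i y_i x_i$ is correct and in fact yields exactly the direction the paper obtains, namely $\hat\beta_{AM}\propto(11,2)$. The paper arrives at the same direction differently: it works in the primal and picks the \emph{specific} value $\mu=\lambda/(15k)$, the largest $\mu$ for which only the constraint $\beta_1+2\beta_2\ge1$ is active, obtaining $\hat\beta_{AM}=(11/15,2/15)$.

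There is, however, a genuine gap in your plan. You condition on the positive-probability event that the $n=4k$ draws are exactly your boundary atoms, but you never say what happens on the complement, and the proposition is about the \emph{unconditional} quantities $\mathcal L(\hat h_{AM},e)$ and $\mathcal L(\hat h_{L2},e)$ (the paper applies the law of total expectation to close the argument). If on the complement you do nothing special, $\hat h_{AM}$ and $\hat h_{L2}$ are both random and uncontrolled, and no comparison follows. The paper's fix is to exploit the hypothesis that $\mu$ may be chosen based on the sampled data: on the unfavorable event one takes $\mu=0$, so that $\hat h_{AM}=\hat h_{L2}$ there and the two conditional errors coincide; the unconditional inequalities then inherit the signs from the favorable event because that event has strictly positive probability. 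Your outline needs exactly this move, and your bulk/boundary split does not supply it.

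It is also worth noting that the paper's construction is simpler than the one you sketch and avoids the bulk/boundary dichotomy entirely: $\dataset$ is supported on just six points, $(\pm10,0)$ and $(\pm1,\pm2)$, with half of each class deterministically at $(\pm10,0)$ and the other half at $(\pm1,\pm2)$ with the sign of the second coordinate given by i.i.d.\ Rademacher variables $a_i$. The favorable event is $\{a_i=+1\ \forall i\}$, on which $\hat\beta_{L2}=(1/5,2/5)$; the test distribution (which averages over a fresh Rademacher draw) then places mass $1/4$ on the atoms $(\pm1,\mp2)$ that $\hat h_{L2}$ misclassifies but $\hat h_{AM}$ classifies correctly. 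The five breakpoints $7\sqrt5/25,\ 15\sqrt5/25,\ \sqrt5,\ 2\sqrt5,\ 110\sqrt5/25$ are precisely the signed distances of these six atoms to the two hyperplanes. So rather than a ``finite search,'' the coordinates are essentially forced by the stated thresholds; once you identify $(\pm10,0)$ and $(\pm1,\pm2)$, conditions (i)--(iv) in your last paragraph are automatic.
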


\begin{proof}
We consider a balanced data distribution with $y_{2i-1} = +1$ and $y_{2i} = -1$ for $i=1,\ldots,2k$. Suppose $x_{2i-1} = (10,0)$ and $x_{2i} = (-10,0)$ for $i = k+1,\ldots,2k$. For $i = 1,\ldots,k$: let $a_i=+1$ be the event that $x_{2i-1} = (1,2)$ and $x_{2i} = (-1,-2)$, and let $a_i=-1$ be the event that $x_{2i-1} = (1,-2)$ and $x_{2i} = (-1,2)$. We assume the $a_i$ are independent Rademacher random variables.

Let $I = \{1,\ldots,k\}$, and note the margin assumption on $\lambda$ implies the classifier satisfies $y_ix_i^\textsf{T}\beta \geq 1$ for all $i=1,\ldots,n$. Define the conditional expectation $\mathcal{L}(h,e,\mathcal{A}) = \mathbb{E}[\max_{\delta : \|\delta\|\leq e}\mathbf{1}(y \neq h(x+\delta))|\mathcal{A}]$ for some event $\mathcal{A}$. Clearly, choosing
$\mu=0$ removes the effect of AM regularization and ensures that
$\mathcal{L}(\hat{h}_{AM},e,\mathcal{A}) \leq
\mathcal{L}(\hat{h}_{L2},e,\mathcal{A})$.

Now consider the event $\mathcal{B}$ where $a_i = +1$ for all $i\in I$. Here, the margin assumption means the classifier must satisfy $10\beta_1 \geq 1$, $\beta_1 + 2\beta_2 \geq 1$. A straightforward calculation gives that $\hat{h}_{L2}$ has $\hat{\beta}_{L2} = (\frac{1}{5},\frac{2}{5})$ and $\mathcal{L}(\hat{h}_{L2},e,\mathcal{B}) = \frac{1}{4} + \frac{1}{4}\mathbf{1}(e \geq \sqrt{5}) + \frac{1}{2}\mathbf{1}(e \geq 2\sqrt{5})$. The AM-regularized SVM is 
\begin{equation}
\begin{aligned}
\min\ & \lambda\|\beta\|^2 - 2\mu k\cdot(\beta_1 + 2\beta_2) - 2\mu k\cdot(10\beta_1)\\
\text{s.t. } & 10\beta_1 \geq 1\\
& \beta_1 + 2\beta_1 \geq 1
\end{aligned}
\end{equation}
Now suppose we choose the largest $\mu$ such that the optimal solution satisfies $10\beta_1 > 1$ and $\beta_1 + 2\beta_1 = 1$. Then it can be easily verified that this largest value is $\mu = \lambda/(15k)$, and so a straightforward calculation gives that $\hat{h}_{AM}$ has $\hat{\beta}_{AM} = (\frac{11}{15},\frac{2}{15})$ and $\mathcal{L}(\hat{h}_{AM},e,\mathcal{B}) = \frac{1}{4}\mathbf{1}(e \geq 7\sqrt{5}/25) + \frac{1}{4}\mathbf{1}(e \geq 15\sqrt{5}/25) + \frac{1}{2}\mathbf{1}(e \geq 110\sqrt{5}/25)$. This means that $\mathcal{L}(\hat{h}_{AM},e,\mathcal{B}) < \mathcal{L}(\hat{h}_{L2},e,\mathcal{B})$ for $e \in [0,\sqrt{5})$ and $\mathcal{L}(\hat{h}_{AM},e,\mathcal{B}) = \mathcal{L}(\hat{h}_{L2},e,\mathcal{B})$ for $e \geq \sqrt{5}$. And as discussed
earlier, choosing $\mu = 0$ for the event $\neg\mathcal{B}$ ensures
that $\mathcal{L}(\hat{h}_{AM},e,\neg\mathcal{B}) \leq
\mathcal{L}(\hat{h}_{L2},e,\neg\mathcal{B})$. Next note $\mathcal{B},\neg\mathcal{B}$ partition the sample space, and that $\mathbb{P}(\mathcal{B}) > 0$. Hence the result follows from the law of total expectation.
\end{proof}

This is a more subtle result than that of \cite{tsipras2018there}, which considers $\mathcal{L}(\cdot,e)$ for adversarially trained classifiers at only two discrete values of $e$. Our analysis shows that AM regularization can both increase classifier accuracy (i.e., at $e = 0$) and robustness to adversarial perturbations for specific data distributions. This is seen because AM regularization generally has lower expected classification error over the whole range of $e$ except for a very narrow range. The AM regularization is less robust when $e \in[15\sqrt{5}/25, \sqrt{5})$ because it has made a careful tradeoff between maximizing the margin and maximizing the average margin.

Another point to note is that for the data distribution in the above proposition, the adversarially trained SVM does not improve upon the standard SVM.

\begin{proposition}
Let $\hat{h}_{D\gamma}$ be the linear classifier computed by adversarially trained SVM where the adversary can perturb data by $\gamma \in[0,\sqrt{5})$ and let $\hat{h}_{L2}$ be the linear classifier computed by SVM, using $n=4k\geq 4$ points sampled from a data distribution. Recalling (\ref{eqn:lsvm}), we assume $\lambda$ is chosen so all sampled (and perturbed) points lie beyond the margin, and that without loss of generality the linear classifier has no intercept term (i.e. $h(x) = x^\textsf{T}\beta$). Then, for the distribution that is considered in the proof of \Cref{prop:bet}, we have that $\mathcal{L}(\hat{h}_{D\gamma},e) = \mathcal{L}(\hat{h}_{L2},e)$ for all $e \geq 0$.
\end{proposition}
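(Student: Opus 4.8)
The plan is to show that for every realization of the random data the adversarially trained classifier $\hat h_{D\gamma}$ and the plain SVM classifier $\hat h_{L2}$ define the same separating‑hyperplane direction, and then to observe that, for a linear classifier, $\mathcal L(\cdot,e)$ depends on the classifier only through that direction (and on $e$). Averaging over realizations via the law of total expectation then gives $\mathcal L(\hat h_{D\gamma},e) = \mathcal L(\hat h_{L2},e)$ for all $e\ge 0$.

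First I would reduce the adversarially trained SVM to a robust hard‑margin program. Using the standard equivalence between adversarial training and robustification of the margin constraints for linear SVM \cite{xu2009robust,bertsimas2017robust}, the inner maximization over perturbations $\|\delta_i\|\le\gamma$ replaces each constraint $y_i x_i^\textsf{T}\beta \ge 1 - s_i$ by $y_i x_i^\textsf{T}\beta \ge 1 - s_i + \gamma\|\beta\|$; the assumption that $\lambda$ is chosen so that all perturbed points lie beyond the margin forces every $s_i = 0$, so $\hat h_{D\gamma}$ is given by the vector $\hat\beta_{D\gamma}$ minimizing $\|\beta\|^2$ subject to $y_i x_i^\textsf{T}\beta \ge 1 + \gamma\|\beta\|$ for all $i$. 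The same assumption with $\gamma = 0$ identifies $\hat h_{L2}$ with the hard‑margin solution $\beta_0$, the minimizer of $\|\beta\|^2$ subject to $y_i x_i^\textsf{T}\beta \ge 1$.

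Next I would prove the key identity $\hat\beta_{D\gamma} = \beta_0/(1 - \gamma\|\beta_0\|)$, a strictly positive multiple of $\beta_0$. Feasibility is a one‑line check: scaling $\beta_0$ by the positive factor $1/(1-\gamma\|\beta_0\|)$ scales $y_i x_i^\textsf{T}\beta$ and makes every robust constraint hold, with equality exactly on the support vectors of the hard‑margin problem. For optimality, if $\beta$ is feasible for the robust program then $\beta/(1+\gamma\|\beta\|)$ is feasible for the hard‑margin program, whence $\|\beta\|/(1+\gamma\|\beta\|)\ge\|\beta_0\|$; rearranging gives $\|\beta\|\ge\|\beta_0\|/(1-\gamma\|\beta_0\|)$, so $\hat\beta_{D\gamma}$ is the (unique, by convexity of the feasible set) minimum‑norm feasible point. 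This argument, like the beyond‑margin assumption itself, requires $\gamma < 1/\|\beta_0\|$, i.e. $\gamma$ below the realization's max‑margin value, which is exactly what keeps the robust program feasible. In particular $\hat\beta_{D\gamma}$ and $\hat\beta_{L2}=\beta_0$ are positive scalar multiples of one another on every realization, hence share the same unit normal $u = \beta/\|\beta\|$.

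Finally I would note that for a linear classifier $h(x) = x^\textsf{T}\beta$ an adversary with budget $e$ flips the label on $(x,y)$ exactly when $\min_{\|\delta\|\le e} y(x+\delta)^\textsf{T}\beta = y x^\textsf{T}\beta - e\|\beta\| \le 0$, i.e. when $y\,x^\textsf{T}u \le e$; hence the adversarial error indicator, and therefore $\mathcal L(h,e)$, depends on $\beta$ only through $u$. Applying this on each realization $\mathcal A$ of the data gives $\mathcal L(\hat h_{D\gamma},e,\mathcal A) = \mathcal L(\hat h_{L2},e,\mathcal A)$, and the law of total expectation over $\mathcal A$ yields $\mathcal L(\hat h_{D\gamma},e) = \mathcal L(\hat h_{L2},e)$ for every $e\ge 0$. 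The only step with real content is the optimality half of the direction identity, where the scaling trick against the hard‑margin feasible set does the work; the rest is definitional bookkeeping.
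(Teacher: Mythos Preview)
Your argument is correct and cleaner than what the paper sketches. The paper's own proof is a one-line pointer back to the case analysis of Proposition~\ref{prop:bet}: condition on the events $\mathcal{B}$ and $\neg\mathcal{B}$, explicitly solve the adversarially trained program on each, check that the resulting $\hat\beta_{D\gamma}$ points in the same direction as $\hat\beta_{L2}$, and conclude via total expectation. You instead prove a structural identity that holds on \emph{every} realization satisfying the beyond-margin assumption: the robust hard-margin minimizer is $\beta_0/(1-\gamma\|\beta_0\|)$, a positive rescaling of the ordinary hard-margin solution $\beta_0$, so the two classifiers share a unit normal and hence the same $\mathcal{L}(\cdot,e)$. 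This buys you a distribution-free explanation of \emph{why} adversarial training is inert for linear SVM in the separable regime (it rescales but never rotates), whereas the paper's approach would only verify the fact for this particular construction. The one place the specific distribution could re-enter your argument is in checking that $\gamma<1/\|\beta_0\|$ on each realization; you rightly tie this to feasibility of the robust program and hence to the beyond-margin hypothesis itself, so nothing is missing.
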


\begin{proof}
The proof of this proposition follows a similar argument to that for Proposition \ref{prop:bet}.
\end{proof}

\begin{figure*}[!t]
	\begin{center}
	\makebox[\textwidth]{\includegraphics[width=1.2\linewidth]{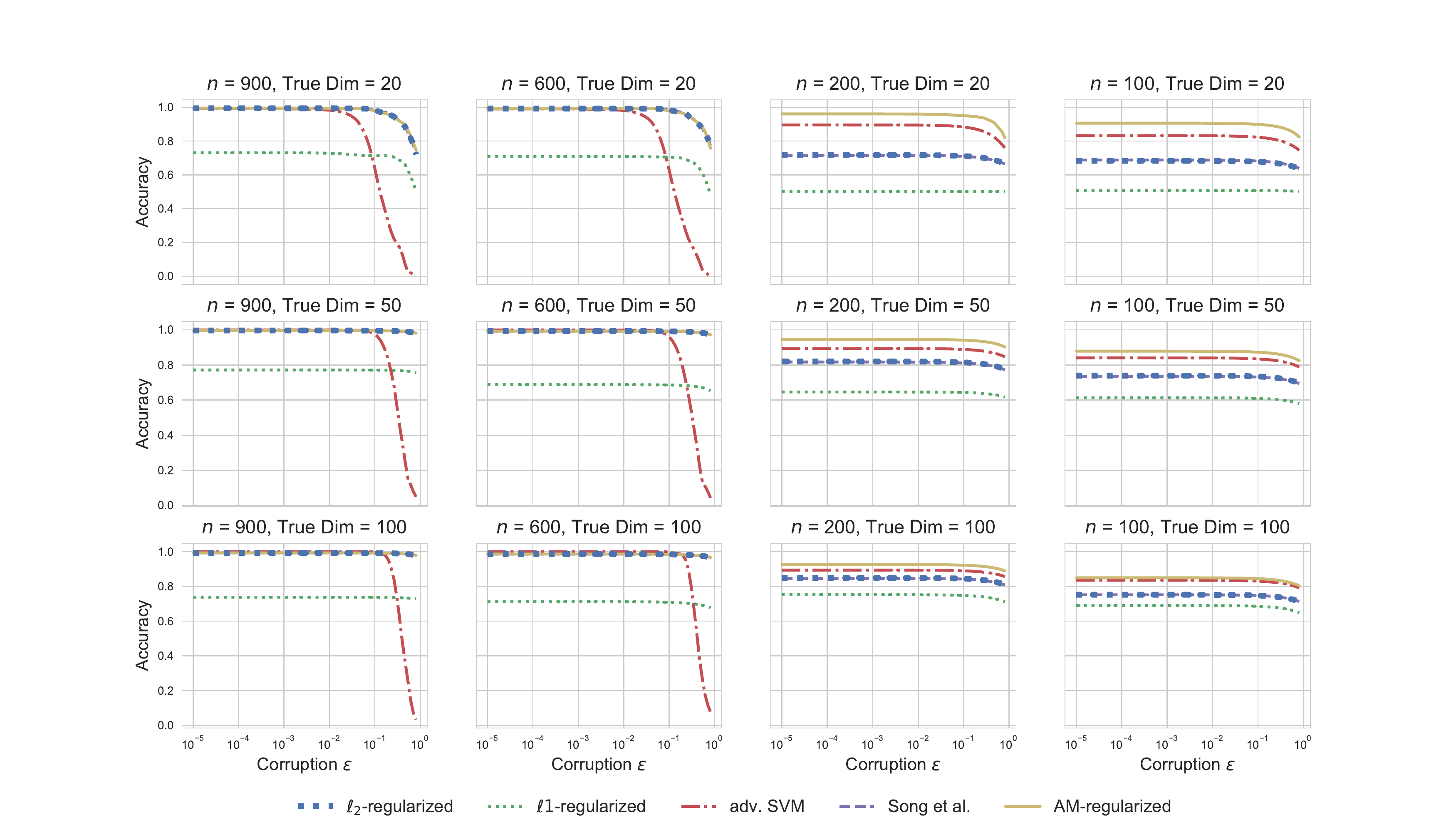}}
	\caption{Robustness to adversarial corruptions of classifiers trained on synthetic data, with accuracy measured as the fraction correctly classified.}
	\label{fig:randdata}
	\end{center}
\end{figure*}

\section{EMPIRICAL RESULTS} \label{sec:results}

Here, we use synthetic and real data to compare AM regularization to other regularizations, including adversarial training, for linear SVM. We first present the benchmark regularization methods that we compare AM regularization to. Next, we present empirical results for linear SVM using synthetic data, and we conclude this section by presenting numerical results for linear SVM applied to a real dataset.

\subsection{Benchmarks}

We compare linear SVM with AM regularization to: SVM with $\ell_2$ regularization, SVM with $\ell_1$ regularization, the robust SVM training method of \cite{song2002robust}, and the adversarial training method outlined by \cite{trafalis2007robust,xu2009robustness,bertsimas2017robust}.

\paragraph{Song et al.}

This method relies on minimizing the impact of outliers on the design of the separator. Specifically, it avoids the over-reliance on such outliers by shifting the loss according to the distance of a point to its respective centroid:
\begin{multline}
\min \textstyle\lambda\|\beta\|^2+\\\textstyle\frac{1}{n}\sum_{i=1}^n\left(1-y_i(\beta^{\textsf{T}}x_i^{\vphantom{\textsf{T}}}+b)-\gamma\|x_i-\mu_{y_i}\|_2^2\right)_+,
\end{multline}
where $\mu_{y_i}$ is the centroid for class $y_i$, and $\lambda,\gamma\geq 0$ are tuning parameters.

\paragraph{Adversarial Training}

Adversarial training formulates SVM training as a bilevel program, where the loss is minimized not over the original data $\{x_i\}_{i=1}^n$, but rather over worst-case perturbations of the data $\{x_i+\delta_i\}_{i=1}^n$, where the perturbations $\delta_i$ are restricted to be in some space. If we restrict $\|\delta_i\|_q\le\gamma$ for some norm $\|\cdot\|_q$, this is modeled as
\begin{equation} \label{eq:advsvm}
\begin{aligned}
\min\ &\textstyle\lambda\|\beta\|^2+\frac{1}{n}\sum_{i=1}^ns_i\\
\text{s.t. } & y_i(\beta^{\textsf{T}}x_i^{\vphantom{\textsf{T}}}+b)-\gamma\|\beta\|_{q^*}\geq 1-s_i, \text{ for } i = 1,\ldots,n\\
&s_i\geq 0,\hspace{3.93cm} \text{for } i = 1,\ldots,n
\end{aligned}
%
\end{equation}
where $\|\cdot\|_{q^*}$ is the dual norm of $\|\cdot\|_q$. In this section, we consider the case where $q=2$.

\subsection{Synthetic Data} \label{subsec:synthetic}

We next run randomized experiments to demonstrate situations where AM regularization is beneficial. We generate $U_+,U_-$ by taking the $Q$ from a QR decomposition done on a matrix of size $d\times m$ with standard normal entries, and $\mu_+,\mu_-\in\R^d$ to have uniformly-distributed entries between 0 and 2. Then, we set $x = \Pi_{U_+}\mu_++U_+v_++\varepsilon_+$ when $y=+1$, and $x = \Pi_{U_-}\mu_-+U_-v_-+\varepsilon_-$ when $y=-1$, where $v_+\sim\N(\mu_+,I_m)$, $\varepsilon_+\sim\N(\mathbf{0}_d,\varepsilon I_d)$, $v_-\sim\N(\mu_-,I_m)$, and $\varepsilon_-\sim\N(\mathbf{0}_d,\varepsilon I_d)$. A training set of $n$ samples was created. All models were then tested using 10,000 samples with adversarial corruption of various magnitude. This procedure was repeated 100 times with $\varepsilon=0.01$ and $d=200$, and the results for various training-set sizes and values of $n$ are presented in \Cref{fig:randdata}. Our method improves both robustness and accuracy in low-data and low-dimensionality regimes.

\subsection{MNIST Experiments} \label{subsec:mnist}

Next, we use the classic MNIST dataset \cite{lecun2010mnist} to evaluate AM regularization. In \Cref{tab:svm}, we compare AM-regularization to the benchmark methods on the fraction of test points correctly classified. For simplicity, we focus on the binary classification problem of separating hand-drawn 0's from hand-drawn 1's, and report results from training on 10\% of available training data. All hyperparameter values were set using 5-fold cross-validation on the training set. We repeat this process 50 times for each model, and report the accuracy results on a common testing set corrupted with various degrees of adversarial perturbations. We note that our method achieves better or almost equivalent accuracy with low corruption, and is best able to retain that level of accuracy at high levels of corruption.

\begin{table}[!t]
	\caption{Fraction Correctly Classified when Classifying 0 vs. 1 in MNIST Dataset}
	\label{tab:svm}
	\begin{center}
		\begin{small}
			\begin{sc}
				\begin{tabular}{lccc}
					\toprule[1.5pt]
					Corruption & 0.01 & 0.2 & 1.0 \\
					\midrule[1.5pt]
					$\ell_2$-regularized  &    0.940 &    0.938 &    0.921 \\
					$\ell_1$-regularized     &    0.997 &    0.995 &    0.714 \\
					AM-regularized     &    0.998 &    0.997 &    0.983 \\
					Adversarial   &    0.996 &    0.995 &    0.979 \\
					Song et al. &    0.994 &    0.992 &    0.974 \\
					\bottomrule[1.5pt]
				\end{tabular}
			\end{sc}
		\end{small}
	\end{center}
\end{table}

\section{CONCLUSION} \label{sec:conclusion}

Based on a novel generalization bound, we have proposed in this paper a new form of regularization for margin classifiers that we call average margin (AM) regularization. Our theoretical and empirical results support its use by showing that AM regularization can increase both classifier accuracy and adversarial robustness. Taken together, our results suggest that adversarial fragility is an issue of overfitting, rather than a fundamental uniqueness of adversarial perturbations. 

One future topic is to better understand AM regularization's theoretical properties. We believe AM regularization works best in the finite sample regime and that its asymptotic behavior when $\mu \not\rightarrow 0$ may be poor. Another topic is modifications of AM regularization. For instance, consider a hinged average margin (HAM) regularization that adds $-\mu\times \frac{1}{n}\sum_{i=1}^n(\gamma-y_ih(x_i))^+$, where $\mu,\gamma \geq 0$ are tuning parameters. HAM regularization may improve AM regularization by providing saturation, whereby points \emph{very} far (specifically $\gamma$ distance away) from the margin are not considered in the average margin calculation. Promisingly, this is convex in $y_ih(x_i)$. Furthermore, we would like to investigate the efficacy of AM regularization for deep classifiers.


\addtolength{\textheight}{-3cm}   




\bibliographystyle{IEEEtran}
\bibliography{IEEEabrv,rsvm}

\begin{thebibliography}{10}
\providecommand{\url}[1]{#1}
\csname url@samestyle\endcsname
\providecommand{\newblock}{\relax}
\providecommand{\bibinfo}[2]{#2}
\providecommand{\BIBentrySTDinterwordspacing}{\spaceskip=0pt\relax}
\providecommand{\BIBentryALTinterwordstretchfactor}{4}
\providecommand{\BIBentryALTinterwordspacing}{\spaceskip=\fontdimen2\font plus
\BIBentryALTinterwordstretchfactor\fontdimen3\font minus
  \fontdimen4\font\relax}
\providecommand{\BIBforeignlanguage}[2]{{%
\expandafter\ifx\csname l@#1\endcsname\relax
\typeout{** WARNING: IEEEtran.bst: No hyphenation pattern has been}%
\typeout{** loaded for the language `#1'. Using the pattern for}%
\typeout{** the default language instead.}%
\else
\language=\csname l@#1\endcsname
\fi
#2}}
\providecommand{\BIBdecl}{\relax}
\BIBdecl

\bibitem{szegedy2013intriguing}
C.~Szegedy, W.~Zaremba, I.~Sutskever, J.~Bruna, D.~Erhan, I.~Goodfellow, and
  R.~Fergus, ``Intriguing properties of neural networks,'' in \emph{ICLR},
  2014.

\bibitem{fawzi2015fundamental}
A.~Fawzi, O.~Fawzi, and P.~Frossard, ``Fundamental limits on adversarial
  robustness,'' in \emph{ICML Workshop on Deep Learning}, 2015.

\bibitem{dalvi2004adversarial}
N.~Dalvi, P.~Domingos, S.~Sanghai, and D.~Verma, ``Adversarial
  classification,'' in \emph{KDD}.\hskip 1em plus 0.5em minus 0.4em\relax ACM,
  2004, pp. 99--108.

\bibitem{biggio2017wild}
B.~Biggio and F.~Roli, ``Wild patterns: Ten years after the rise of adversarial
  machine learning,'' \emph{Pattern Recognition}, vol.~84, pp. 317--331, 2018.

\bibitem{abrams2008malicious}
M.~Abrams and J.~Weiss, ``Malicious control system cyber security attack case
  study--{Maroochy} water services, australia,'' \emph{MITRE}, 2008.

\bibitem{langner2011stuxnet}
R.~Langner, ``Stuxnet: Dissecting a cyberwarfare weapon,'' \emph{IEEE Security
  \& Privacy}, vol.~9, no.~3, pp. 49--51, 2011.

\bibitem{cardenas2008research}
A.~A. C{\'a}rdenas, S.~Amin, and S.~Sastry, ``Research challenges for the
  security of control systems.'' in \emph{HotSec}, 2008.

\bibitem{ferrag2018security}
M.~A. Ferrag, L.~Maglaras, A.~Argyriou, D.~Kosmanos, and H.~Janicke, ``Security
  for 4g and 5g cellular networks: A survey of existing authentication and
  privacy-preserving schemes,'' \emph{Journal of Network and Computer
  Applications}, vol. 101, pp. 55--82, 2018.

\bibitem{lanckriet2002robust}
G.~R. Lanckriet, L.~E. Ghaoui, C.~Bhattacharyya, and M.~I. Jordan, ``A robust
  minimax approach to classification,'' \emph{Journal of Machine Learning
  Research}, vol.~3, no. Dec, pp. 555--582, 2002.

\bibitem{trafalis2007robust}
T.~B. Trafalis and R.~C. Gilbert, ``Robust support vector machines for
  classification and computational issues,'' \emph{Optimisation Methods and
  Software}, vol.~22, no.~1, pp. 187--198, 2007.

\bibitem{bertsimas2017robust}
D.~Bertsimas, J.~Dunn, C.~Pawlowski, and Y.~D. Zhuo, ``Robust classification,''
  \emph{J. Mach. Learn. Res}, 2017.

\bibitem{goodfellow6572explaining}
I.~J. Goodfellow, J.~Shlens, and C.~Szegedy, ``Explaining and harnessing
  adversarial examples,'' in \emph{ICLR}, 2015.

\bibitem{xu2009robust}
H.~Xu, C.~Caramanis, and S.~Mannor, ``Robust regression and lasso,'' in
  \emph{NeurIPS}, 2009, pp. 1801--1808.

\bibitem{xu2009robustness}
------, ``Robustness and regularization of support vector machines,'' \emph{J.
  Mach. Learn. Res,}, vol.~10, no. Jul, pp. 1485--1510, 2009.

\bibitem{livni2012simple}
R.~Livni, K.~Crammer, and A.~Globerson, ``A simple geometric interpretation of
  {SVM} using stochastic adversaries,'' in \emph{AISTATS}, 2012, pp. 722--730.

\bibitem{cortes1995support}
C.~Cortes and V.~Vapnik, ``Support-vector networks,'' \emph{Machine learning},
  vol.~20, no.~3, pp. 273--297, 1995.

\bibitem{aggarwal2001outlier}
C.~C. Aggarwal and P.~S. Yu, ``Outlier detection for high dimensional data,''
  in \emph{ACM Sigmod Record}, vol.~30, no.~2.\hskip 1em plus 0.5em minus
  0.4em\relax ACM, 2001, pp. 37--46.

\bibitem{scholkopf2001learning}
B.~Scholkopf and A.~J. Smola, \emph{Learning with kernels: support vector
  machines, regularization, optimization, and beyond}.\hskip 1em plus 0.5em
  minus 0.4em\relax MIT press, 2001.

\bibitem{weston1999adaptive}
J.~Weston and R.~Herbrich, ``Adaptive margin support vector machines,''
  \emph{NeurIPS}, pp. 281--296, 1999.

\bibitem{bi2005support}
J.~Bi and T.~Zhang, ``Support vector classification with input data
  uncertainty,'' in \emph{NeurIPS}, 2005, pp. 161--168.

\bibitem{gerber2009manifold}
S.~Gerber, T.~Tasdizen, S.~Joshi, and R.~Whitaker, ``On the manifold structure
  of the space of brain images,'' in \emph{MICCAI}, 2009, pp. 305--312.

\bibitem{pless2009survey}
R.~Pless and R.~Souvenir, ``A survey of manifold learning for images,''
  \emph{IPSJ Transactions on Computer Vision and Applications}, vol.~1, pp.
  83--94, 2009.

\bibitem{peyre2009manifold}
G.~Peyr{\'e}, ``Manifold models for signals and images,'' \emph{Computer Vision
  and Image Understanding}, vol. 113, no.~2, pp. 249--260, 2009.

\bibitem{roberts1989utility}
A.~Roberts, ``The utility of an invariant manifold description of the evolution
  of a dynamical system,'' \emph{SIAM journal on mathematical analysis},
  vol.~20, no.~6, pp. 1447--1458, 1989.

\bibitem{shi2010off}
L.-C. Shi and B.-L. Lu, ``Off-line and on-line vigilance estimation based on
  linear dynamical system and manifold learning,'' in \emph{EMBC}, 2010, pp.
  6587--6590.

\bibitem{tsipras2018there}
D.~Tsipras, S.~Santurkar, L.~Engstrom, A.~Turner, and A.~Madry, ``There is no
  free lunch in adversarial robustness (but there are unexpected benefits),''
  \emph{arXiv preprint arXiv:1805.12152}, 2018.

\bibitem{krause2004leveraging}
N.~Krause and Y.~Singer, ``Leveraging the margin more carefully,'' in
  \emph{ICML}, 2004, p.~63.

\bibitem{collobert2006trading}
R.~Collobert, F.~Sinz, J.~Weston, and L.~Bottou, ``Trading convexity for
  scalability,'' in \emph{ICML}, 2006, pp. 201--208.

\bibitem{liu2006multicategory}
Y.~Liu and X.~Shen, ``Multicategory $\psi$-learning,'' \emph{Journal of the
  American Statistical Association}, vol. 101, no. 474, pp. 500--509, 2006.

\bibitem{bartlett2002rademacher}
P.~L. Bartlett and S.~Mendelson, ``Rademacher and gaussian complexities: Risk
  bounds and structural results,'' \emph{Journal of Machine Learning Research},
  vol.~3, no. Nov, pp. 463--482, 2002.

\bibitem{suzumura2014outlier}
S.~Suzumura, K.~Ogawa, M.~Sugiyama, and I.~Takeuchi, ``Outlier path: A homotopy
  algorithm for robust {SVM},'' in \emph{ICML}, 2014, pp. 1098--1106.

\bibitem{yu2010relaxed}
Y.-l. Yu, M.~Yang, L.~Xu, M.~White, and D.~Schuurmans, ``Relaxed clipping: A
  global training method for robust regression and classification,'' in
  \emph{NeurIPS}, 2010, pp. 2532--2540.

\bibitem{song2002robust}
Q.~Song, W.~Hu, and W.~Xie, ``Robust support vector machine with bullet hole
  image classification,'' \emph{IEEE Transactions on Systems, Man, and
  Cybernetics, Part C (Applications and Reviews)}, vol.~32, no.~4, pp.
  440--448, 2002.

\bibitem{masnadi2010design}
H.~Masnadi-Shirazi, V.~Mahadevan, and N.~Vasconcelos, ``On the design of robust
  classifiers for computer vision,'' in \emph{CVPR}, 2010, pp. 779--786.

\bibitem{xu2006robust}
L.~Xu, K.~Crammer, and D.~Schuurmans, ``Robust support vector machine training
  via convex outlier ablation,'' in \emph{AAAI}, vol.~6, 2006, pp. 536--542.

\bibitem{biggio2011support}
B.~Biggio, B.~Nelson, and P.~Laskov, ``Support vector machines under
  adversarial label noise,'' in \emph{Asian Conference on Machine Learning},
  2011, pp. 97--112.

\bibitem{natarajan2013learning}
N.~Natarajan, I.~S. Dhillon, P.~K. Ravikumar, and A.~Tewari, ``Learning with
  noisy labels,'' in \emph{NeurIPS}, 2013, pp. 1196--1204.

\bibitem{carlini2017adversarial}
N.~Carlini and D.~Wagner, ``Adversarial examples are not easily detected:
  Bypassing ten detection methods,'' in \emph{ACM Workshop on Artificial
  Intelligence and Security}.\hskip 1em plus 0.5em minus 0.4em\relax ACM, 2017,
  pp. 3--14.

\bibitem{madry2017towards}
A.~Madry, A.~Makelov, L.~Schmidt, D.~Tsipras, and A.~Vladu, ``Towards deep
  learning models resistant to adversarial attacks,'' \emph{arXiv preprint
  arXiv:1706.06083}, 2017.

\bibitem{kannan2018adversarial}
H.~Kannan, A.~Kurakin, and I.~Goodfellow, ``Adversarial logit pairing,''
  \emph{arXiv preprint arXiv:1803.06373}, 2018.

\bibitem{raghunathan2018certified}
A.~Raghunathan, J.~Steinhardt, and P.~Liang, ``Certified defenses against
  adversarial examples,'' \emph{arXiv preprint arXiv:1801.09344}, 2018.

\bibitem{belkin2006manifold}
M.~Belkin, P.~Niyogi, and V.~Sindhwani, ``Manifold regularization: A geometric
  framework for learning from labeled and unlabeled examples,'' \emph{JMLR},
  vol.~7, no. Nov, pp. 2399--2434, 2006.

\bibitem{bickel2007local}
P.~J. Bickel and B.~Li, ``Local polynomial regression on unknown manifolds,''
  in \emph{Complex Datasets and Inverse Problems}.\hskip 1em plus 0.5em minus
  0.4em\relax Institute of Mathematical Statistics, 2007, pp. 177--186.

\bibitem{aswani2011regression}
A.~Aswani, P.~Bickel, and C.~Tomlin, ``Regression on manifolds: Estimation of
  the exterior derivative,'' \emph{The Annals of Statistics}, vol.~39, no.~1,
  pp. 48--81, 2011.

\bibitem{cheng2013local}
M.-Y. Cheng and H.-T. Wu, ``Local linear regression on manifolds and its
  geometric interpretation,'' \emph{Journal of the American Statistical
  Association}, vol. 108, no. 504, pp. 1421--1434, 2013.

\bibitem{kakade2009complexity}
S.~M. Kakade, K.~Sridharan, and A.~Tewari, ``On the complexity of linear
  prediction: Risk bounds, margin bounds, and regularization,'' in
  \emph{NeurIPS}, 2009, pp. 793--800.

\bibitem{lecun2010mnist}
Y.~LeCun, C.~Cortes, and C.~Burges, ``{MNIST} handwritten digit database,''
  \emph{AT\&T Labs [Online]. Available: http://yann.lecun.com/exdb/mnist},
  vol.~2, 2010.

\end{thebibliography}

\end{document}